\documentclass[acmsigecom,notfinal]{style/sigecom/acmtrans2m}
\usepackage{microtype}

\usepackage{amsmath,amsfonts,amssymb,mathtools}

\allowdisplaybreaks[1]

\usepackage{graphicx}
\usepackage{booktabs}
\usepackage{tabularx}
\usepackage{array}
\usepackage{multirow}
\usepackage{adjustbox}
\usepackage{bbm}
\usepackage{xcolor}
\usepackage{tikz}
\usepackage{pgfplots}
\pgfplotsset{compat=1.18}

\usepackage{listings}
\usepackage[most]{tcolorbox}
\tcbuselibrary{skins,breakable,listings}

\makeatletter
\let\bibhang\@undefined
\makeatother
\usepackage[numbers,sort&compress]{natbib}

\makeatletter
\let\pdfbookmark\@undefined
\makeatother
\usepackage[pagebackref=true]{hyperref}
\usepackage{hypernat}
\definecolor{mylinkblue}{HTML}{0072bc}
\hypersetup{
  colorlinks=true,
  allcolors=mylinkblue,
  pdfencoding=auto,
  unicode=true,
}
\makeatletter
\renewcommand*{\NAT@nmfmt}[1]{%
  \hyper@natlinkstart{\@citeb}#1\hyper@natlinkend
}
\makeatother
\usepackage[capitalize,noabbrev,nameinlink]{cleveref}
\crefformat{equation}{#2Equation~#1#3}
\Crefformat{equation}{#2Equation~#1#3}

\definecolor{highlightcolor}{HTML}{e9f1ff}
\definecolor{custombar}{HTML}{aadbc9}
\definecolor{newtext}{rgb}{0.1,0.3,0.4}

\makeatletter
\@ifundefined{newdef}{\let\newdef\newtheorem}{}
\makeatother

\newtheorem{theorem}{Theorem}

\newdef{definition}[theorem]{Definition}
\newdef{remark}[theorem]{Remark}

\makeatletter

\newcommand{\scai@begintheorem}[2]{%
  \refstepcounter{theorem}%
  \begingroup
    \edef\x{\endgroup
      \noexpand\gdef\noexpand\scai@theoremhref{\@currentHref}}%
  \x
  \trivlist
  \item[\hskip 10pt\hskip\labelsep{%
    {\normalfont\scshape Theorem}\hskip 5pt\relax \thetheorem%
    \if\relax\detokenize{#1}\relax
      .%
    \else
      \hskip 5pt\relax #1.%
    \fi
  }]%
  \itshape
  \let\@currentHref\scai@theoremhref
  \ignorespaces
}
\renewenvironment{theorem}[1][]{\scai@begintheorem{#1}{}}{\endtrivlist}

\def\@begintheorem#1#2{\trivlist \item[\hskip 10pt\hskip
\labelsep{{\normalfont\scshape #1}\hskip 5pt\relax #2.}] \itshape}
\def\@opargbegintheorem#1#2#3{\trivlist
\item[\hskip 10pt \hskip
\labelsep{{\normalfont\scshape #1}\savebox\@tempboxa{{\normalfont\scshape #3}}\ifdim
\wd\@tempboxa>\z@ \hskip 5pt\relax {\normalfont\scshape #2}\ \box\@tempboxa\fi.}]
\itshape}
\makeatother

\makeatletter
\renewenvironment{definition}[1][]{%
  \refstepcounter{theorem}%
  \trivlist
  \item[\hskip 10pt\hskip\labelsep{%
    {\normalfont\scshape Definition}\hskip 5pt\relax \thetheorem%
    \if\relax\detokenize{#1}\relax
      .%
    \else
      \hskip 5pt\relax{\normalfont\itshape #1}.%
    \fi
  }]%
  \normalfont\ignorespaces
}{\endtrivlist}

\makeatother

\newcommand{\E}{\mathbb{E}}

\newcommand{\D}{\mathcal{D}}
\newcommand{\X}{\mathcal{X}}
\newcommand{\Y}{\mathcal{Y}}

\newcommand{\annotators}{\mathcal{A}}

\DeclareMathOperator{\BordaOp}{Borda}

\newcommand{\B}{\BordaOp}

\newcommand{\rewardfn}{r}
\newcommand{\rewardfntheta}{\rewardfn_{\theta}}
\newcommand{\rewardfnstar}{\rewardfn^{\star}}
\newcommand{\pibase}{\pi_{\mathrm{base}}}

\newcommand{\tightdisplayspacing}{%
  \setlength{\abovedisplayskip}{7pt plus 1pt minus 1pt}%
  \setlength{\belowdisplayskip}{7pt plus 1pt minus 1pt}%
  \setlength{\abovedisplayshortskip}{6pt plus 1pt minus 1pt}%
  \setlength{\belowdisplayshortskip}{6pt plus 1pt minus 1pt}%
}

\makeatletter
\g@addto@macro\normalsize{\tightdisplayspacing}
\g@addto@macro\small{\tightdisplayspacing}
\makeatother

\newcommand{\runinhead}[1]{%
  \par\addvspace{0.35\baselineskip}%
  \noindent\textbf{#1}\enspace\ignorespaces
}

\newif\ifcompactroadmaprefs
\compactroadmaprefstrue

\newcommand{\roadmapref}[2]{%
  \ifcompactroadmaprefs
    \Cref{#1}%
  \else
    \hyperref[#1]{Section~\ref*{#1}: #2}%
  \fi
}

\newif\ifshowcomments
\showcommentstrue

\ifshowcomments
\newcommand{\ariel}[1]{{\color{red}[Ariel: #1]}}
\newcommand{\ben}[1]{{\color{orange}[Ben: #1]}}
\newcommand{\shirley}[1]{{\color{violet}[Shirley: #1]}}
\newcommand{\itai}[1]{\textcolor{teal}{[Itai: #1}]}

\newcommand{\evi}[1]{\textcolor{blue}{Evi: #1}}
\newcommand{\daniel}[1]{\textcolor{magenta}{Daniel: #1}}
\newcommand{\todo}[1]{\textcolor{red}{[TODO: #1]}}
\else
\newcommand{\ariel}[1]{}
\newcommand{\ben}[1]{}
\newcommand{\shirley}[1]{}
\newcommand{\itai}[1]{}
\newcommand{\evi}[1]{}
\newcommand{\daniel}[1]{}
\newcommand{\todo}[1]{}
\fi

\renewcommand{\citet}[1]{\citeauthor{#1}~\citep{#1}}

\tcbset{
  respbox/.style={
    colback=highlightcolor,
    colframe=black!25,
    boxrule=0.6pt,
    arc=2pt,
    left=6pt,
    right=6pt,
    top=4pt,
    bottom=4pt
  }
}

\lstdefinestyle{llm}{
  basicstyle=\ttfamily\footnotesize,
  breaklines=true,
  breakatwhitespace=true,
  columns=fullflexible,
  keepspaces=true,
  showstringspaces=false,
  escapeinside={(*@}{@*)}
}

\newtcolorbox{LLMBox}[2][]{%
  enhanced,
  breakable,
  colback=white,
  colframe=black!35,
  boxrule=0.5pt,
  arc=1mm,
  left=1.5mm,
  right=1.5mm,
  top=1.0mm,
  bottom=1.0mm,
  title=\textbf{#2},
  fonttitle=\normalsize,
  before upper={%
    \ttfamily\footnotesize
    \setlength{\parindent}{0pt}%
    \setlength{\parskip}{0pt}%
    \raggedright
  },
  #1
}

\newcommand{\reward}[2][]{%
  \if\relax\detokenize{#1}\relax
    \rewardfn\!\left(#2\right)%
  \else
    \rewardfn_{#1}\!\left(#2\right)%
  \fi
}

\newcommand{\rewardstar}[2][]{%
  \if\relax\detokenize{#1}\relax
    \rewardfn^{\star}\!\left(#2\right)%
  \else
    \rewardfn_{#1}^{\star}\!\left(#2\right)%
  \fi
}

\newcommand{\rewardtheta}[1]{\reward[\theta]{#1}}

\tightdisplayspacing

\makeatletter
\renewcommand{\@biblabel}[1]{[#1]}
\makeatother

\newcommand{\papertitle}{AI Alignment From Social Choice Perspectives}

\title[\papertitle]{\papertitle}

\author[HALPERN et al.]{%
DANIEL HALPERN \\ Google Research\\
EVI MICHA \\ University of Southern California\\
ARIEL D. PROCACCIA \\ Harvard University\\
BENJAMIN SCHIFFER \\ Harvard University\\
ITAI SHAPIRA \\ Harvard University
\and
SHIRLEY ZHANG \\ Harvard University
}

\begin{abstract}
Alignment from human feedback uses human judgments about model outputs to steer the behavior of language models after pretraining. When those judgments reflect conflicting views of desirable behavior, the learned objective becomes an aggregate determination of what the model should prefer. We survey recent work that has studied this aggregation problem through the lens of social choice theory. We illustrate how the social choice perspective helps identify failure modes in the feedback aggregation layer and reveals a broader design space for handling disagreement in explicit and principled ways.
\end{abstract}

\begin{document}

\makeatletter
\def\pages{\pageref{@firstpg}--\pageref{@lastpg}}
\def\mypage{\thepage}
\def\runningfoot{\gdef\@runningfoot{
}}
\def\firstfoot{\gdef\@firstfoot{
}}
\makeatother

\begin{bottomstuff}
Authors are listed alphabetically. Correspondence to Itai Shapira
(\href{mailto:itaishapira@g.harvard.edu}{itaishapira@g.harvard.edu}).
\end{bottomstuff}

\maketitle

\section{Introduction}\label{sec:intro}
\emph{AI alignment} is the problem of ensuring that artificial intelligence systems act in ways consistent with human intentions, preferences, and normative constraints~\citep{leike_scalable_2018,ji_ai_2025,gabriel_artificial_2020,russell_human_2019}. Achieving alignment requires identifying which outputs and behavioral patterns are desirable or unacceptable, and using those judgments to control and steer the system~\citep{taylor_alignment_2020,kenton_alignment_2021,gabriel_matter_2025}.

For open-ended language models, operationalizing these goals presents a difficult challenge. The full range of desirable behaviors cannot be explicitly specified~\citep{hendrycks_unsolved_2022}; the underlying human judgments rely on tacit knowledge and subtle tradeoffs between objectives, making them virtually impossible to capture in formal terms~\citep{leike_scalable_2018,casper_open_2023}.

Fortunately,  humans are far better at recognizing acceptable behavior than they are at formally articulating it~\citep{clark_why_2018,yue_k-armed_2012}, an asymmetry that motivates a shift from explicit specification to learned evaluation~\citep{NIPS2017_d5e2c0ad}. In methods for \emph{alignment from human feedback}, most notably \emph{reinforcement learning from human feedback (RLHF)}~\citep{NIPS2017_d5e2c0ad,ouyang_training_2022,ziegler_fine-tuning_2020,stiennon_learning_2020,rafailov_direct_2023}, human annotators evaluate concrete model outputs, and a scoring function trained on this feedback is then used to align the model.\footnote{Constitutional AI~\citep{baiConstitutionalAIHarmlessness2022} and its variants inherit the same framing one level up, as the ``constitution'' is itself the product of preference aggregation over normative principles~\citep{huang_collective_2024}, and discretion must still be exercised in resolving their conflicts~\citep{buyl_ai_2025}.} Typically, annotators express preferences via pairwise comparisons, selecting the better of two responses to a given prompt.  Aggregated across users and contexts, the annotations are distilled into a learned reward model: a parametric scoring function trained to predict the desirability of a given output. This reward model is expected to generalize beyond the labeled training data, capturing implicit norms and tradeoffs among competing objectives~\citep{jacobs_measurement_2021}.

The foregoing methods compress human preferences about model behavior into a single, universally applicable scalar reward signal that ostensibly represents human judgment. This approach implicitly assumes a shared, underlying human intuition that can be statistically recovered by querying human evaluators~\citep{sorensen_position_2024,halpern_pairwise_2025,zhi-xuan_beyond_2025}. Under this view, annotators are interchangeable. Conflicting preferences are treated as noisy observations of a common ground truth, rather than evidence of value pluralism worth reflecting in the system~\citep{sorensen_value_2024,halpern_pairwise_2025}. While such methods have proven empirically successful on tasks where evaluators largely agree~\citep{stiennon_learning_2020,ouyang_training_2022,baiTrainingHelpfulHarmless2022}, in many contexts, interpretations of ``correct'' behavior diverge across backgrounds and cultures~\citep{yuan2024cultural,prabhakaran2022cultural,anwarFoundationalChallengesAssuring2024a,kirk_prism_2024,feng2024modular,zhang_diverging_2025}. 

When annotators inherently disagree, reward modeling goes beyond statistical estimation; it becomes a form of preference aggregation. It collapses conflicting individual judgments into a single collective preference, implicitly fixing tradeoffs among competing values while obscuring the mechanism used to resolve them~\citep{pardeshi_learning_2024}. Evaluating this choice requires examining the assumed model structure and the aggregation method itself. These questions cannot be resolved within the training objective alone. Instead, they fall squarely within the domain of \emph{social choice theory}~\citep{sen_possibility_1998,sen_social_1986,conitzer_position_2024,dai_mapping_2024,mishra_ai_2023,ge_axioms_2024,alamdariPolicyAggregation2024,maura-rivero_jackpot_2025,conitzer_moral_2017,prasad_social_2018}, the mathematical study of how heterogeneous individual preferences are aggregated into collective decisions.

Viewed through this lens, alignment can be analyzed as a formal aggregation pipeline: from elicited judgments to learned objectives to optimized policies.
Social choice provides the precise language to express desired properties explicitly, making the normative assumptions embedded in aggregation procedures more transparent and open to mathematical comparison.

In this paper, we survey three related types of contributions the social choice perspective has brought to recent alignment research. First, this framing identifies the aggregation rules embedded in widely used methods such as RLHF and DPO~\citep{rafailov_direct_2023}, showing which assumptions and normative priorities these methods build into their objectives. Second, social choice axioms expose structural failure modes of such rules, identifying conditions under which alignment methods provably violate desirable aggregation properties. Third, social choice provides a broad toolbox of well-studied aggregation methods, informing new approaches that encode different normative objectives through various choices of feedback elicitation, aggregation, and policy optimization.

Yet thinking of alignment from human feedback purely in terms of classical social choice misses two distinctive aspects of alignment that shape much of the research surveyed below. The first is \emph{generalization}. Because the candidate-response space is generated and effectively unbounded, the feedback data cover only a sparse set of comparisons over prompt-response pairs, provided by a small subset of evaluators. The learned preference model must then generalize in three directions: from evaluated responses to unseen responses, from curated prompts to unseen prompts, and from sampled annotators to a broader population of users~\citep{gao_scaling_2023}.

The second feature is that alignment is ultimately evaluated on the \emph{downstream policy}~\citep{wen_rethinking_2024,frickHowEvaluateReward2024a}. The intermediate learned reward model matters only insofar as optimizing against it produces a policy aligned with human judgments. A reward model that satisfies desirable aggregation properties need not induce a policy that does the same.

The goal, then, is to adapt what social choice has long understood about collective decision-making into the practical machinery of model training. AI alignment, in this light, is not merely the problem of steering AI to follow human values, but rather of establishing fair principles for incorporating the diversity of values people actually hold~\citep{gabriel_artificial_2020,gabriel_challenge_2023,baum_social_2020}. The sections that follow trace this adaptation. 

\runinhead{Survey Roadmap.}
After \roadmapref{sec:rlhf}{RLHF Primer and Notation} introduces the notation used throughout this paper, \roadmapref{sec:bt-borda}{Reward Learning as Implicit Aggregation} identifies the aggregation rule implicit in unconstrained Bradley--Terry reward learning, showing that it behaves like the classic \emph{Borda rule}.

The next two sections examine failure modes that surface when this aggregation rule is placed under the distinctive constraints of alignment. \roadmapref{sec:clone-robustness}{Clone Robustness} explores clone robustness, an inherited
social-choice pathology that is especially natural in language-model settings,
where generated candidate sets may contain many nearby strings expressing the same
substantive answer, making the learned reward sensitive to how that answer is
represented in the sample. \roadmapref{sec:pareto-optimality}{Parametric Reward Learning and Unanimity} turns to failures that arise from the way the reward function is learned from data, in particular from restricting the reward to a limited reward class.

The following sections move from aggregation to the information and
policy-level limits of alignment from feedback. \roadmapref{sec:welfare-loss}{Welfare Loss and Sparse Elicitation} studies what sparse feedback can reveal about the underlying population preference distribution and how welfare can be lost when policies are optimized from limited preference information. \roadmapref{sec:nlhf}{Direct Alignment From Pairwise Preferences} then considers policy-level methods that avoid first compressing pairwise preferences into a scalar reward. It explains how \emph{Nash learning from human feedback} optimizes directly against pairwise preferences and achieves optimal welfare guarantees.
We conclude in \Cref{sec:discussion} by discussing related directions at the boundary between social choice and alignment.
\section{RLHF Primer and Notation}
\label{sec:rlhf}

This section fixes the notation used throughout the paper and reviews the standard RLHF pipeline at the level needed for our analysis. We first define prompts, responses, annotators, policies, and reward models, then describe how pairwise preference data are aggregated into a scalar reward via \emph{random utility models} and the Bradley--Terry loss, and finally recall the KL-regularized policy objective used to optimize a language model against the learned reward.

\runinhead{Setup and Notation.}
Let $\mathcal{X}$ and $\mathcal{Y}$ denote the prompt and response spaces, respectively. A prompt $x$ may encode either a single query or the dialogue history of a multi-turn interaction. Let $\annotators$ denote the target population of annotators, and let $i \sim \annotators$ denote a randomly sampled annotator.
A stochastic policy $\pi$ assigns to each prompt $x\in\X$ a distribution $\pi(\cdot\mid x)\in\Delta(\Y)$ over responses.
 We let $\pibase$ denote a fixed reference policy, typically a pretrained model that has undergone supervised fine-tuning. A reward model is a function $\rewardfn:\X\times\Y\to\mathbb{R}$ assigning a real-valued score to each prompt-response pair. When considering a parameterized family of reward models, we write $\rewardfntheta$ for the model indexed by $\theta\in\Theta$. When the prompt $x$ is fixed or clear from context, we write $\reward{y}$ as shorthand for $\reward{x,y}$, and similarly write $\reward[\theta]{y}$ for $\reward[\theta]{x,y}$. For analyses restricted to a fixed prompt $x$, let $\Y_x\subseteq\Y$ denote the set of candidate responses considered for that prompt. Finally, let $D$ denote the distribution over prompts used for post-training and policy evaluation.

\runinhead{Preference Data and Reward Learning.}\label{sec:reward_learning_setup}
Alignment from human feedback can use several forms of supervision, including demonstrations, scalar ratings, critiques, natural-language feedback, and comparisons~\citep{kaufmann_survey_2023}. In the reward-modeling stage of standard RLHF pipelines, the data are typically comparison labels: for a prompt $x$ and two candidate responses $y,y'\in\Y$, an annotator indicates which response they prefer~\citep{stiennon_learning_2020,ouyang_training_2022}. Rankings over a slate of responses are often converted into induced pairwise comparison labels~\citep{ouyang_training_2022,zhu_principled_2023}. We treat these labels as samples from a population preference relation and write
\[
P_x(y \succ y')\coloneqq \Pr_{i\sim\annotators}[y\succ_i y']\in[0,1].
\]
That is, \(P_x(y \succ y')\) is the population probability that a randomly drawn annotator prefers \(y\) to \(y'\) on prompt \(x\). Most methods considered in this survey take only this pairwise preference object as input.\footnote{In social choice theory, such rules are called \emph{C2 rules}~\cite{fishburn_condorcet_1977}.} At times, we view a sampled annotator \(i\sim\annotators\) as having an unobserved implicit reward function \(r_i(x,\cdot)\) over responses, which induces the preference relation \(\succ_i\). 

Reward learning fits a scalar function $\rewardfntheta$ whose induced pairwise
probabilities approximate these population probabilities. In standard alignment
pipelines \citep{NIPS2017_d5e2c0ad,ziegler_fine-tuning_2020,stiennon_learning_2020,ouyang_training_2022},
this is done by fitting a parametric reward model via maximum likelihood under a
\emph{random utility model (RUM)}, in which observed preferences arise from latent
per-response rewards. Specifically, 
\[P_x(y \succ y')=F\!\left(r(x,y)-r(x,y')\right),\]
where $F:\mathbb{R}\to(0,1)$ is an increasing link function satisfying
$F(t)=1-F(-t)$ and encoding the comparison noise
\citep{thurstone_law_1927,luce_individual_1959,mcfadden_conditional_1973,noothigattu_axioms_2020}.
Different noise structures, i.e., different choices of $F$, lead to different
objective functions~\citep{knox_models_2023}.
The widely used \emph{Bradley--Terry (BT) model}~\citep{bradley_rank_1952} takes
$F$ to be the sigmoid $\sigma(t)\coloneqq(1+e^{-t})^{-1}$, corresponding to
Gumbel-distributed noise~\citep{truong_machine_2026,yellott_relationship_1977}.

Concretely, let $\D$ be a dataset of pairwise comparisons, where each triple $(x,y^+,y^-)\in\D$ has $x\in\X$, $y^+,y^-\in\Y$, and records that an annotator preferred $y^+$ to $y^-$ on prompt $x$. For a parametric reward model $\rewardfntheta$, BT training minimizes the following logistic loss: 
\begin{equation}
\label{eq:bt-loss}
\mathcal{L}_{\mathrm{BT}}(\theta)
=\sum_{(x,y^+,y^-)\in\D}\log\!\Big(1+\exp\big(-(\reward[\theta]{x,y^+}-\reward[\theta]{x,y^-})\big)\!\Big).
\end{equation}
This loss function is the negative log-likelihood of the observed preference data under the assumption that preferences satisfy 
$P_x(y^+ \succ y^-)
=
\sigma\left(r(y^+) - r(y^-)\right).$
When this holds, we say that the BT model is \emph{correctly specified}. 

\runinhead{KL-Regularized RLHF.}
During post-training, the learned reward function provides scalar feedback to the language model to optimize against. In the classical formulation~\citep{ziegler_fine-tuning_2020}, the policy objective is given by: 
\begin{equation}
\label{eq:kl-obj}
\max_{\pi}\ 
\E_{x\sim D}\left[
\E_{y\sim \pi(\cdot\mid x)}\big[\reward{x,y}\big]
- \beta^{-1}
\mathrm{KL}\big(\pi(\cdot\mid x)\,\|\,\pibase(\cdot\mid x)\big)
\right].
\end{equation}
We abbreviate
\[
\mathrm{KL}_{D}(\pi \,\|\, \pibase)
\coloneqq
\E_{x\sim D}\!\left[\mathrm{KL}\big(\pi(\cdot\mid x)\,\|\,\pibase(\cdot\mid x)\big)\right].
\]
The second term keeps the new policy from drifting too far from the reference policy, and we treat $\beta$, the tilt strength (inverse temperature), as a control parameter for this drift. Larger $\beta$ pushes $\pi(\cdot\mid x)$ more aggressively toward high-reward responses and further from $\pibase(\cdot\mid x)$. Even a small gap between two responses can then shift a large amount of probability mass when $\beta$ is large, so aggregation errors that look minor at the reward-learning stage can be magnified by downstream policy optimization.

\section{Reward Learning as Implicit Aggregation}
\label{sec:bt-borda}
 
When human feedback is heterogeneous, different annotators can give conflicting preference labels for the same prompt. Fitting a single reward then requires the objective to reconcile these judgments into one score per response. BT reward learning is usually read as statistical estimation of latent utilities from noisy pairwise comparisons. However, this framing obscures the reconciliation process, treating it as a technical detail rather than a choice of aggregation rule with inherent normative tradeoffs. This section asks what rule BT reward learning implicitly applies, and shows it to be closely connected to a classical voting rule.

The \emph{Borda rule}, also known as Borda count, is named after Jean-Charles de Borda, who (re)introduced it in the 18th century~\citep{Borda1784,brandt2016handbook}. In our setting, fix a prompt \(x\) and let \(D_x\) be the distribution over responses used to generate comparison candidates. Let each annotator induce a preference relation \(\succ_i\) over responses in the support of \(D_x\).  The \emph{Borda score} of a response \(y\) is the population probability that \(y\) is preferred to a randomly drawn comparison candidate:\footnote{If ties are allowed, one can replace \(P_x(y \succ y')\) by \(P_x(y \succ y')+\tfrac12 P_x(y \sim y')\). If \(D_x\) assigns positive probability to \(y'=y\), this convention also treats self-comparisons as ties.}
\begin{equation}\label{eq:borda}
    \B(y)\coloneqq\E_{y'\sim D_x}\bigl[P_x(y \succ y')\bigr].
\end{equation}
That is, Borda favors responses that maximize their expected pairwise advantage against a randomly drawn candidate.

In practice, the underlying aggregation rule of BT reward learning is obscured by two structural constraints: the learner observes only a sparse, finite sample of comparisons, and it must fit a parametric reward function to generalize across distinct prompts and responses. To isolate the aggregation step, we remove these constraints and consider an idealized setting. Assume the true population pairwise preference probabilities are known for all pairs in the candidate set $\mathcal{Y}_x$, and allow each response $y \in \mathcal{Y}_x$ to receive an independent, arbitrary scalar reward. The BT objective reduces to the prompt-level population loss
\begin{equation}\label{eq:bt-pop}
    \mathcal{L}_x(\rewardfn)
    \coloneqq
    -\,\E_{y,y'\sim D_x}\bigl[\,P_x(y \succ y')\,\log\sigma\bigl(\reward{y}-\reward{y'}\bigr)\bigr],
\end{equation}
minimized over all $\rewardfn \colon \operatorname{supp}(D_x) \to \mathbb{R}$. This represents the asymptotic limit of empirical BT learning given infinite comparisons and a fully expressive reward class. Under these conditions, the BT objective has a clean social-choice characterization: its optimizer ranks responses exactly by their Borda scores.
\begin{theorem}[\cite{anderson_relationships_2009}] \label{thm:bt_borda}
Fix a prompt \(x\), and let \(D_x\) be the distribution over responses
used to generate comparison candidates. Let \(\rewardfnstar\) be a finite-valued minimizer of \cref{eq:bt-pop}. Then, for any
\(y,y'\in\operatorname{supp}(D_x)\),
\[
\B(y)>\B(y')
\quad\Longleftrightarrow\quad
\rewardstar{y}>\rewardstar{y'}.
\]
\end{theorem}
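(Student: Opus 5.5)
The plan is to read off the ordering directly from the first-order optimality conditions of \cref{eq:bt-pop}. Write \(p(y)\) for the probability \(D_x\) assigns to \(y\). I will first treat \(\operatorname{supp}(D_x)\) as finite, so that \(\rewardfn\) is a vector in \(\mathbb{R}^{\operatorname{supp}(D_x)}\). Then \(\mathcal{L}_x\) is a finite sum of smooth functions, and any finite-valued minimizer \(\rewardfnstar\) is an interior point at which every partial derivative vanishes. So \(\partial \mathcal{L}_x/\partial \rewardfn(z)=0\) for each \(z\in\operatorname{supp}(D_x)\).

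The computation uses two identities: \(\frac{d}{dt}\log\sigma(t)=1-\sigma(t)=\sigma(-t)\) and \(\sigma(t)+\sigma(-t)=1\). The coordinate \(\rewardfn(z)\) appears in terms with \(y=z\) and in terms with \(y'=z\). Collecting both, and setting \(d_{y'}=\rewardfn(z)-\rewardfn(y')\), I expect
\[
\frac{\partial \mathcal{L}_x}{\partial \rewardfn(z)}
=-p(z)\sum_{y'}p(y')\Bigl[P_x(z\succ y')\,\sigma(-d_{y'})-P_x(y'\succ z)\,\sigma(d_{y'})\Bigr].
\]
Now substitute \(P_x(y'\succ z)=1-P_x(z\succ y')\), using the tie convention from the footnote to \cref{eq:borda}, under which \(P_x(z\succ z)=\tfrac12\). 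The bracket then collapses to \(P_x(z\succ y')-\sigma(d_{y'})\). The self-comparison term is consistent because \(\sigma(0)=\tfrac12\).

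Since \(p(z)>0\), stationarity at \(\rewardfnstar\) yields the fixed-point identity
\[
\B(z)=\E_{y'\sim D_x}\bigl[\sigma\bigl(\rewardstar{z}-\rewardstar{y'}\bigr)\bigr]\eqqcolon G\bigl(\rewardstar{z}\bigr),
\qquad
G(t)\coloneqq\E_{y'\sim D_x}\bigl[\sigma\bigl(t-\rewardstar{y'}\bigr)\bigr].
\]
The key point is that \(G\) is a single function of a real variable, the same for every \(z\), because the expectation is over \(y'\) with \(\rewardfnstar\) frozen. It is strictly increasing since \(\sigma\) is. Hence \(\B(y)=G(\rewardstar{y})\) and \(\B(y')=G(\rewardstar{y'})\). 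Strict monotonicity of \(G\) gives both directions of the equivalence at once: \(\rewardstar{y}>\rewardstar{y'}\) iff \(G(\rewardstar{y})>G(\rewardstar{y'})\) iff \(\B(y)>\B(y')\).

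The main obstacle I anticipate is not the algebra but the technical setting. If \(D_x\) has infinite support, the coordinate-wise derivative must be replaced by a directional (Gateaux) derivative of \(\mathcal{L}_x\) along perturbations \(\rewardfn+\epsilon h\). Differentiating under the expectation should be justified by dominated convergence, since the derivative of \(\log\sigma\) is bounded by \(1\). This gives \(\E_{z\sim D_x}[h(z)(\B(z)-G(\rewardstar{z}))]=0\) for all bounded \(h\), hence \(\B=G\circ\rewardfnstar\) for \(D_x\)-almost every \(z\). For non-atomic \(D_x\) the pointwise statement then holds only up to a null set, or after choosing a version of \(\rewardfnstar\). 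I would state the finite-support case as the core argument and remark on this extension. A smaller point to check is that the tie convention is exactly what makes \(P_x(z\succ y')+P_x(y'\succ z)=1\); this is what the collapse of the bracket relies on.
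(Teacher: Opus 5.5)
Your proposal is correct and follows essentially the same route as the paper: you set the partial derivative of $\mathcal{L}_x$ in the coordinate $r(y)$ to zero to get $\B(y)=\E_{z\sim D_x}[\sigma(r^\star(y)-r^\star(z))]$, then conclude from strict monotonicity of the single function $t\mapsto\E_{z\sim D_x}[\sigma(t-r^\star(z))]$. Your explicit use of the tie convention $P_x(y\succ z)+P_x(z\succ y)=1$ (including $P_x(z\succ z)=\tfrac12$) is exactly what the paper's derivative formula silently relies on, and your caveat restricting to finite or atomic support matches the paper's implicit assumption that $D_x(y)>0$ for every $y$ in the support.
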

\begin{proof}[of \Cref{thm:bt_borda}]
For any \(y\) with \(D_x(y)>0\), differentiating the BT objective with respect to
\(\reward{y}\) gives
\[
\frac{\partial\mathcal{L}_x}{\partial \reward{y}}(\rewardfn)
=
D_x(y)
\E_{z\sim D_x}
\Big[ \sigma(\reward{y}-\reward{z}) - P_x(y\succ z) \Big].
\]
Thus first-order optimality at \(\rewardfnstar\), together with \(D_x(y)>0\), gives
\begin{equation}\label{eq:bt-borda-row-sum}
\B(y)
=
\E_{z\sim D_x}
\Big[
\sigma(\rewardstar{y}-\rewardstar{z})
\Big].
\end{equation}
Let
\(
f(t)=
\E_{z\sim D_x}
[
\sigma(t-\rewardstar{z})
].
\)
By \cref{eq:bt-borda-row-sum}, \(\B(y)=f(\rewardstar{y})\) for every
\(y\) with \(D_x(y)>0\). Since \(\sigma(t-\rewardstar{z})\) is strictly
increasing in \(t\) for every \(z\), the function \(f\) is strictly increasing.
Hence, for all \(y,y'\) with \(D_x(y),D_x(y')>0\),
\begin{align*}
  \B(y) > \B(y')
  \ \Longleftrightarrow\ 
  f(\rewardstar{y}) > f(\rewardstar{y'})
  \ \Longleftrightarrow\ 
  \rewardstar{y} > \rewardstar{y'}. \tag*{\qed}
\end{align*}
\end{proof}
The lineage of this result is long and somewhat fragmented. The identity in \cref{eq:bt-borda-row-sum} appears already in the work of \citet{zermelo_berechnung_1929}, was independently rediscovered by \citet{bradley_rank_1952} and \citet{Ford1957SolutionOA}, and was restated by \citet{daniels_round-robin_1969} and \citet{Jech_1983}, all assuming that the pairwise preference probabilities are generated by a BT model. \citet{anderson_relationships_2009} state the result in the form closest to ours. \citet{siththaranjan_distributional_2023} restate it in the RLHF setting.

\cref{thm:bt_borda} clarifies what kind of disagreement BT reward learning preserves when it approximates Borda-style aggregation. Intuitively, Borda favors breadth of acceptability rather than depth of support or pairwise dominance \citep{brandt2016handbook}. A response can receive a high score because it is consistently acceptable against many candidate responses, even when another response has stronger support in a particular direct comparison. In this sense, Borda selects the consensus or compromise response, the one no annotator subgroup strongly objects to, over a response backed by a narrower majority, and it downranks polarizing responses \citep{maskin_bordas_2025,reilly_social_2002}. The classic critique is that broad mid-strength acceptability can systematically advantage bland responses over distinctive ones that elicit both strong support and strong opposition in the annotator population \citep{balinski_majority_2010}.

After the reward model is learned, post-training converts this score into a change in the model's response distribution, shifting probability mass toward responses with higher Borda scores. How strongly any particular response is upweighted depends on both the Borda score learned from annotators and the base policy $\pibase$. To make this concrete, we return to the idealized setting and assume that fine-tuning attains the unparameterized optimum, where the decision variable for each prompt $x$ is the conditional distribution $\pi(\cdot \mid x)$ itself. The maximizer then has the closed-form Boltzmann/Gibbs expression~\citep{todorov_linearly-solvable_2006,peters_relative_2010}
\[\pi^\star(y \mid x) \;\propto\; \pibase(y \mid x)\,\exp\bigl(\beta\, r(y)\bigr),\]
from which it follows~\citep{shirali_direct_2025,truong_machine_2026} that:
\[
\B(y)>\B(y')
\quad\Longleftrightarrow\quad
\frac{\pi^\star(y\mid x)}{\pibase(y\mid x)}
>
\frac{\pi^\star(y'\mid x)}{\pibase(y'\mid x)}.
\]
This implies that the post-training density ratio against the base policy is, up to a monotone transformation, the \(\pibase\)-weighted Borda score of \(y\).

In practice, many prominent alignment deployments intentionally flatten $P_x$ into binary preference labels by taking the majority vote, disregarding the soft-label information as noise~\citep{kirk2023past,chen_preference_2024,kim2024margin,lambert2023history,prabhakaran2021releasing,ganguliRedTeamingLanguage2022,stiennon_learning_2020,baiConstitutionalAIHarmlessness2022}. Applied before BT fitting, this preprocessing induces the \emph{Copeland rule}~\citep{copeland_reasonable_1951}, which ranks responses by the number of pairwise majority contests they win~\citep{xiao_theoretical_2025,an_differential_2026}.

\section{Clone Robustness}\label{sec:clone-robustness}

\Cref{sec:bt-borda} identified BT reward learning as a Borda-like aggregation rule on a fixed candidate set. This connection suggests that BT reward learning may inherit some of Borda's pathologies. A particularly relevant one for language-model reward learning is sensitivity to near-duplicate candidates. In voting theory, this means that adding candidates nearly identical to an existing candidate can change the rule's outcome~\citep{tideman_independence_1987}. Since Borda is well known to be sensitive to such near-duplicate candidates, \emph{robustness to approximate clones} becomes a natural first benchmark for BT reward learning.

Approximate clones arise structurally in generated language~\citep{berker_obvious_2024,conitzer_position_2024}. For a fixed prompt, candidate responses are sampled from a generative model whose probability mass often concentrates around a small number of semantic and stylistic modes. A single substantive answer can therefore appear in many forms, with differences that are largely surface-level, such as paraphrasing, reordered explanations, or changes in verbosity. The candidate set $\Y_x$ is only a sampled discretization of this broader response space, and the often arbitrary number of variants associated with a given substantive answer reflects the sampling and filtering process as much as human preference. Exact copies can often be extracted and merged before training, but the harder case involves near-duplicates: responses that differ slightly while occupying the same semantic region and expressing the same underlying answer.

Within a near-clone cluster, pairwise comparisons among variants provide little new signal. Annotators may exhibit minor idiosyncratic preferences among them, but these differences are usually not the underlying quality the reward model is meant to capture. A clone-sensitive BT model can nevertheless treat the number of variants in the cluster as meaningful, so adding near-duplicates can change the rewards assigned to individual candidates and the total policy probability assigned to the cluster as a whole. The learned reward, and by extension the policy it induces, can then depend on arbitrary sampling artifacts, such as how densely different answers are represented in $\Y_x$, rather than on human preference~\citep{berriaud_clone-robust_2025}. As models generate more fluent paraphrases and subtle stylistic variants of the same substantive answer, this representational multiplicity can become a larger source of reward variation.

The corresponding formal requirement is \emph{robustness to approximate clones}: the learned reward should be stable under the addition of near-duplicates. Adding a response that is nearly identical to an existing one should make the two responses receive nearly the same reward, and should not substantially change the rewards assigned to unrelated responses. \citet{procaccia_clone-robust_2025} formalize this requirement using a metric on responses. Let $\rho$ be a metric on $\Y$, where smaller values indicate greater similarity. Fix a prompt $x$ and a finite candidate set $\Y_x \subseteq \Y$. Let $\rewardfn_{\Y_x}$ denote the reward learned from comparisons over $\Y_x$, for example by minimizing the unrestricted prompt-level population analogue of \cref{eq:bt-loss}.

\begin{definition}
A reward-learning procedure is \emph{robust to approximate clones} if for every $\delta>0$ there exists $\varepsilon>0$ such that the following holds. For any finite candidate set $\Y_x \subseteq \Y$, any $y\in \Y_x$, and any new response $y'$ satisfying $\rho(y,y')\le \varepsilon$, the rewards learned before and after adding $y'$ satisfy
\[
    \left|\rewardfn_{\Y_x\cup\{y'\}}(y)
    -
    \rewardfn_{\Y_x\cup\{y'\}}(y')\right|
    \le \delta
\]
and
\[
    \left|\rewardfn_{\Y_x\cup\{y'\}}(z)
    -
    \rewardfn_{\Y_x}(z)\right|
    \le \delta
    \qquad \forall z\in \Y_x.
\]
\end{definition}
The first condition says that the original response and its approximate clone should receive nearly the same reward once both are present. The second says that adding the clone should not substantially change the rewards assigned to the pre-existing responses. Together, the two conditions require the reward-learning rule to treat near-duplicates as redundant representations of the same local region.

Using the connection to Borda, \citet{procaccia_clone-robust_2025} show that standard BT reward learning is not robust to approximate clones. They then propose a weighted version of the population BT objective that addresses this problem. The idea is to assign each response a uniqueness weight $w(y)$, measuring the normalized mass of points in the response space for which $y$ is the nearest candidate. Responses in crowded regions receive smaller weight, while responses representing larger regions receive larger weight. Applying these weights to pairwise comparisons yields the weighted BT objective in the following theorem.

\begin{theorem}[\cite{procaccia_clone-robust_2025}] \label{thm:declone}
    Let $\mathcal{S}\subseteq\mathbb{R}^d$ be a compact response space with finite positive volume. Fix a prompt $x$, a finite candidate set $\Y_x\subseteq\mathcal{S}$, and $\lambda>0$. Draw a point $s$ uniformly at random from $\mathcal{S}$, and assign it to one of the nearest candidates in $\Y_x$, breaking ties uniformly at random; let $w_{\Y_x}(y)$ denote the resulting weight of candidate $y$. Define the weighted BT loss 
\begin{align*}
\mathcal{L}^w_\lambda(\rewardfn)\coloneqq
-\sum_{\mathclap{\substack{y,y'\in \Y_x\\y\neq y'}}}
w_{\Y_x}(y)w_{\Y_x}(y')P_x(y \succ y')
\log \sigma(\reward{y}-\reward{y'})
+\frac{\lambda}{2}\sum_{y\in \Y_x}w_{\Y_x}(y)\reward{y}^2.
    \end{align*}
    If $P_x(y\succ y')$ is Lipschitz continuous in the responses, then the reward-learning algorithm that minimizes $\mathcal{L}^w_\lambda(\rewardfn)$ is robust to approximate clones under the $\ell_2$ metric.
\end{theorem}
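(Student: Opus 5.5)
The plan is to view the minimizer of $\mathcal{L}^w_\lambda$ as a discretization of a continuous objective over reward functions on $\mathcal{S}$. For $r$ piecewise constant on the Voronoi cells of $\Y_x$, write $V(y)$ for the cell of candidate $y$; its normalized volume is essentially $w_{\Y_x}(y)$, with random tie-breaking only affecting measure-zero boundaries. Then $\mathcal{L}^w_\lambda$ is, up to the diagonal terms $y=y'$, the functional
\[
\Phi(r) \;=\; -\int\!\!\int P(s\succ t)\log\sigma\bigl(r(s)-r(t)\bigr)\,ds\,dt \;+\;\frac{\lambda}{2}\int r(s)^2\,ds,
\]
with $P(s\succ t)$ replaced by its value at the cell representatives. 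The first step is to record structural facts about the finite problem. The loss is strictly convex, because the $\ell_2$ term with positive weights makes it so, and it is coercive, so a unique minimizer $r_{\Y_x}$ exists. Its first-order condition reads
\[
\sum_{y'\neq y} w(y')\bigl[P_x(y\succ y')-\sigma(r(y)-r(y'))\bigr] \;=\; \lambda\, r(y).
\]
Because the bracket lies in $[-1,1]$ and $\sum w \le 1$, this gives the uniform a priori bound $|r(y)|\le 1/\lambda$. This weighted analogue of the Borda identity \cref{eq:bt-borda-row-sum} in \Cref{thm:bt_borda} drives the rest of the argument.

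The second step is the first clone condition, $|r(y)-r(y')|\le\delta$. I subtract the first-order conditions for $y$ and for $y'$. The term $\lambda(r(y)-r(y'))$ equals a sum over $z$ of $w(z)\bigl[P(y\succ z)-P(y'\succ z)\bigr]$, which is $O(L\varepsilon)$ by Lipschitzness of $P$. It also contains $-\sum_z w(z)\bigl[\sigma(r(y)-r(z))-\sigma(r(y')-r(z))\bigr]$. Since $\sigma$ is increasing, this second sum has the same sign as $r(y)-r(y')$, so it can only help. There are also cross terms from $z\in\{y,y'\}$. I handle these using $P(y\succ y')+P(y'\succ y)=1$ together with Lipschitzness, which gives $P(y\succ y')\approx P(y'\succ y)\approx 1/2$. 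The outcome is $\lambda|r(y)-r(y')|\le C L\varepsilon$, so choosing $\varepsilon\le\lambda\delta/(CL)$ suffices.

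The third step, stability of the rewards of existing $z\in\Y_x$, is the main obstacle. Adding $y'$ within distance $\varepsilon$ of $y$ splits the old cell $V(y)$ into two cells for $y$ and $y'$. The cells of other candidates change only inside a ball of radius $O(\varepsilon)$ around $y$, but not necessarily by a small volume when cells are thin. I would handle this with a coupling argument. Let $\tilde r$ be the new minimizer and define a candidate solution for the old problem by merging: keep $\tilde r(z)$ and assign $y$ the value $\tilde r(y)$, which by step two is within $\delta$ of $\tilde r(y')$. I then show that the merged vector nearly satisfies the old first-order conditions. The residual consists of two parts: terms where $w_{\mathrm{new}}(y)+w_{\mathrm{new}}(y')$ replaces $w_{\mathrm{old}}(y)$, which differ only through the reward gap and the Lipschitz gap, and terms where the weights of other cells changed, whose total variation must be bounded. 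Finally, I use $\lambda$-strong monotonicity of the gradient in the weighted norm to convert a small residual into a small distance between the two minimizers.

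The delicate point is that the weighted strong convexity controls $\sum_z w(z)(r(z)-\tilde r(z))^2$, whereas the definition needs a bound on every $z$ individually, including $z$ with tiny weight. To get pointwise control I would instead reuse the per-coordinate first-order condition. Dividing out the $\lambda r(z)$ term expresses $r(z)$ as a $1/\lambda$-Lipschitz function of weighted averages, namely $\sum_{z'} w(z')\sigma(\cdot)$ and $\sum_{z'} w(z')P$. These averages move by at most the total-variation change of the weight vector plus the $\ell_1$-weighted change in rewards, and both are controlled by the previous steps. I would also need the total change of the other cells' volumes to tend to $0$ as $\varepsilon\to0$, uniformly over candidate sets. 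This uniformity is where compactness of $\mathcal{S}$ and a careful geometric Voronoi argument are required.
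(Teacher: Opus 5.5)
The survey states this theorem without proof, so I am judging your argument on its own. Your first two steps are sound. The first-order condition, the bound $|r(y)|\le 1/\lambda$, and the pairwise subtraction giving $|\tilde r(y)-\tilde r(y')|=O(L\varepsilon/\lambda)$ are all correct. The endgame of step three is also correct: weighted strong monotonicity turns a uniform residual bound $\gamma$ into $\sum_z w(z)|\hat r(z)-r(z)|\le\gamma/\lambda$ for the merged vector $\hat r$. The per-coordinate condition, with the monotone $\sigma$-term moved to the left, then upgrades this to pointwise control without any lower bound on the weights.

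The gap is the geometric input you feed into step three. Your claim that other cells change only inside an $O(\varepsilon)$-ball around $y$ is false, because changes run along entire bisectors. In $[0,1]^2$ with $y=(\tfrac14,\tfrac12)$, $z=(\tfrac34,\tfrac12)$, $y'=(\tfrac14+\varepsilon,\tfrac12)$, the cell of $z$ loses a full-height strip at distance $\tfrac14$ from $y$. More seriously, the lemma you say you need, that the total change of the other cells' volumes tends to $0$ uniformly over candidate sets, is also false. Take $\mathcal{S}=[0,1]$, $\mathcal{Y}_x=\{y,z\}$ with $y=\tfrac12$, $z=\tfrac12+\eta$, and $y'=\tfrac12+2\eta$ with $2\eta\le\varepsilon$. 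The weight of $z$ drops from $\tfrac12-\tfrac{\eta}{2}$ to $\eta$. Also $w_{\mathrm{new}}(y)+w_{\mathrm{new}}(y')=1-\eta$, while $w_{\mathrm{old}}(y)=\tfrac12+\tfrac{\eta}{2}$. So a residual bound built from the reward gap, the Lipschitz gap, and the total variation of the other weights does not vanish as $\varepsilon\to0$, and step three does not close as planned.

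Separately, strict convexity and dividing the gradient by $w(y)$ require all weights to be positive. For a general compact $\mathcal{S}$, such as a disk with a segment attached, a candidate can own a null cell, and its reward is then unconstrained. You therefore need an assumption such as $\mathcal{S}=\overline{\operatorname{int}\mathcal{S}}$.

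The missing idea is that weight ceded by a near-clone of $y$ is harmless. Let $m_b=w_{\mathrm{old}}(b)-w_{\mathrm{new}}(b)\ge0$ be the mass that $b\in\mathcal{Y}_x$ cedes to $y'$, and let $G_z(b)=P_x(z\succ b)-\sigma(\tilde r(z)-\tilde r(b))$. The old first-order residual of $\tilde r$ at $z$ is then exactly
\[
\sum_{b\neq z} m_b\bigl(G_z(b)-G_z(y')\bigr)-m_z\,G_z(y').
\]
Your step two never used that $y'$ is the new point, so it applies to every pair of candidates of the new problem within distance $\eta+\varepsilon$. Hence $|G_z(b)-G_z(y')|=O\bigl(L(\eta+\varepsilon)(1+1/\lambda)\bigr)$ whenever $\|b-y\|\le\eta$, no matter how large $m_b$ is. Likewise, $|G_z(y')|$ is small when $z$ itself is within $\eta$ of $y$.

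What remains is a quantitative lemma for the far candidates:
\[
\sum_{b:\|b-y\|>\eta}m_b\le C_{\mathcal{S}}\,\varepsilon/\eta^2,
\qquad C_{\mathcal{S}}=2|S^{d-1}|R^{d+1}/\operatorname{vol}(\mathcal{S}),\quad R=\operatorname{diam}\mathcal{S}.
\]
One proof works along rays $y+tu$:
\begin{itemize}
\item the function $g(t)=t-\min_{c\in\mathcal{Y}_x}\|y+tu-c\|$ is nondecreasing;
\item every transferred point has $0\le g(t)\le\varepsilon$;
\item if a far candidate $b$ owns a transferred point $y+t_1u$, then every transferred point with $t\ge t_1$ satisfies $0\le t-\|y+tu-b\|\le\varepsilon$, which confines $t$ to an interval of length at most $2R^2\varepsilon/\|b-y\|^2$.
\end{itemize}
Choosing $\eta=\varepsilon^{1/3}$ makes every residual $O(\varepsilon^{1/3})$ uniformly in $\mathcal{Y}_x$, and your strong-monotonicity argument then finishes the proof. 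Uniformity over candidate sets comes from this near/far split, not from a compactness argument.
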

\cref{thm:declone} should be read as a representation-invariance guarantee for the prompt-level aggregation step. Once an embedding and a metric are fixed, near-duplicate responses are treated as surface representatives of the same region, so adding another does not substantially change the learned rewards. The guarantee reduces the need for exact deduplication. What it does not remove is the dependence on the embedding, metric, and reference space $\mathcal S$ that fix the weights. In this sense, the invariance holds relative to a fixed geometry, shifting the design burden from detecting duplicates to specifying when responses count as close~\citep{deshpande_c-sts_2023}.

\section{Parametric Reward Learning and Unanimity}
\label{sec:pareto-optimality}
\cref{sec:bt-borda,sec:clone-robustness} studied BT reward learning under an unconstrained reward class, where each prompt-response pair has its own free scalar score. In that setting, BT reproduces the Borda score and inherits both its guarantees and its pathologies. In practice, the reward must be implemented by a shared parametric function, so the score assigned to one response is tied to the scores assigned to others through the same hypothesis class. This coupling creates a new source of failures beyond the aggregation effects already discussed.

Formally, the learner chooses parameters $\theta$ from a parameter space $\Theta$, which induce a reward function $r_\theta: \X \times \Y \to \mathbb{R}$. The possible rewards are therefore restricted to the hypothesis class $\mathcal{R}_\Theta = \{r_\theta : \theta \in \Theta\}$. In the standard BT pipeline, $\theta$ is fitted on the comparison dataset $\D$ by minimizing the logistic loss in \cref{eq:bt-loss}, or equivalently by maximizing the BT likelihood.

The simplest failure is a violation of \emph{unanimity}, also known as \emph{Pareto efficiency}. If every annotator strictly prefers response $y^+$ to response $y^-$, the learned reward should assign $y^+$ a higher score. This property is worth studying not because perfect unanimity is likely to occur in practice, but because violating it means ignoring the strongest possible consensus signal, falling below the baseline set by every standard voting rule in the classical setting.
\begin{definition}
A reward-learning procedure satisfies \emph{unanimity} if, for any pair of responses $y^+, y^- \in \Y_x$, whenever every annotator prefers $y^+$ to $y^-$, the learned reward satisfies \(\rewardtheta{y^+} > \rewardtheta{y^-}\).
\end{definition}

\citet{ge_axioms_2024} established this failure for a linear hypothesis class; \citet{hollender_enforcing_2025} showed that it extends to richer parametric reward classes. We follow the linear setting of \citeauthor{ge_axioms_2024}, which admits the cleanest analysis.

In the linear model, each response $y \in \Y_x$ is associated with a known feature vector $\phi(y) \in \mathbb{R}^d$, where $d$ is the embedding dimension, and the reward is linear in those features, $\rewardtheta{y} \coloneqq \langle \theta, \phi(y) \rangle$ with $\theta \in \mathbb{R}^d$. The learner holds $\phi$ fixed and fits only $\theta$. This corresponds to one way of building a reward model, in which $\phi$ is the embedding obtained by removing the final layer of a pretrained language model and the reward is a linear head trained on preference data on top of it. In many settings, the embedding $\phi$ is itself parameterized and trained alongside the last layer rather than held fixed. The linear model described here is best read as an analytically convenient special case of the restricted reward class $\mathcal{R}_\Theta$.

Throughout this section, we fix a prompt \(x \in \X\) and a candidate set \(\Y_x \subseteq \Y\). We assume that the learner observes the complete pairwise preferences of a set of annotators \(\annotators\). That is, for every annotator \(i \in \annotators\) and every pair of distinct responses \(y,y' \in \Y_x\), the learner observes which of the two responses \(i\) prefers.

We further assume that the learner fits a loss function to these observed pairwise preferences. Specifically, the analysis in this section concerns the broader family of loss-based reward-learning rules, of which BT is one member. Given any loss function \(\ell:\mathbb{R}\to\mathbb{R}\), define
\[
\mathcal{L}(\theta;\ell)
\coloneqq
\sum_{\substack{y^+,y^- \in \Y_x\\ y^+ \neq y^-}}
n_{y^+\succ y^-}
\cdot
\ell\left(
\rewardtheta{y^-}-\rewardtheta{y^+}
\right),
\]
where $n_{y^+\succ y^-}$
denotes the number of annotators who prefer \(y^+\) to \(y^-\).  When \(\ell(z)=\log(1+\exp(z))\), this is the fixed-prompt version of the BT logistic loss in \cref{eq:bt-loss}; the corresponding population objective replaces the empirical counts \(n_{y^+ \succ y^-}\) by the pairwise preference probabilities \(P_x(y^+ \succ y^-)\). Hinge loss gives another instance, and more generally the larger the gap \(\rewardtheta{y^-} - \rewardtheta{y^+}\) on a comparison the rule disagrees with, the larger the penalty. This class of rules is a natural family that contains BT and matches the loss-minimization framing of modern training pipelines.

In the unrestricted setting of \cref{sec:bt-borda}, BT satisfies unanimity as a consequence of its equivalence with Borda. This makes unanimity appear to be a natural property of BT. Yet, the theorem below shows that this guarantee need not survive once the reward is required to generalize through a restricted reward class $\mathcal{R}_\Theta$.
\begin{theorem}[\cite{ge_axioms_2024}]
Let \(\ell:\mathbb R\to\mathbb R\) be a loss function satisfying \(\inf_z \ell(z)<\ell(0)\), and suppose that \(\ell\) is either nondecreasing and weakly convex, or strictly convex. Then there exist a dimension \(d\), a finite candidate set \(\Y_x\), feature vectors \(\phi(y)\in\mathbb R^d\), and a population for which the induced loss-based linear reward rule fails unanimity. In particular, there are responses \(y^+,y^-\in\Y_x\) such that every annotator prefers \(y^+\) to \(y^-\), but the learned reward  $r_{\theta^\star}$ ranks them in the opposite order: $\reward[\theta^\star]{y^-}>\reward[\theta^\star]{y^+}$.
\end{theorem}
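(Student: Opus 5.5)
The plan is to build an explicit small counterexample and certify the location of the minimizer \(\theta^\star\) through first-order (or directional) optimality conditions. First I would rewrite the objective in terms of difference vectors. For each ordered pair set \(v_{y^+y^-}\coloneqq\phi(y^-)-\phi(y^+)\), so that
\[
\mathcal{L}(\theta;\ell)=\sum n_{y^+\succ y^-}\,\ell\bigl(\langle\theta,v_{y^+y^-}\rangle\bigr).
\]
The whole instance is then a weighted multiset of vectors in \(\mathbb R^d\), and unanimity fails exactly when some vector \(v_0\) has \(n_{y^-\succ y^+}=0\) while \(\langle\theta^\star,v_0\rangle>0\). I would take \(d=2\) and a candidate set of three or four responses: \(y^+\), \(y^-\), and one or two auxiliary responses \(a,b\). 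The population consists of a few annotator types whose rankings all place \(y^+\) above \(y^-\) but disagree about where \(a\) and \(b\) sit.

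The construction should make the comparisons involving the auxiliary responses dominate the loss and pull \(\theta\) toward a direction \(u\), with features placed so that \(\langle u,\phi(y^+)-\phi(y^-)\rangle<0\). Concretely, I would put \(\phi(y^+)\) and \(\phi(y^-)\) close together, differing along a direction nearly orthogonal to, but slightly against, \(u\). The auxiliary responses then sit far out along \(\pm u\). Large counts of comparisons like ``\(a\succ\) everything'' and ``everything \(\succ b\)'' from the majority types then fix \(\theta\) roughly along \(u\). The single unanimous pair \((y^+,y^-)\) contributes only a small force in the orthogonal direction. To make that force small I would give \(\phi(y^+)-\phi(y^-)\) small norm, so that the gradient contribution \(\ell'(\cdot)\,v_0\) is dominated.

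Certifying the sign of \(\langle\theta^\star,v_0\rangle\) splits into the two cases of the hypothesis. If \(\ell\) is strictly convex, \(\theta^\star\) is unique. I would check that the first-order condition \(\sum n\,\ell'(\langle\theta,v\rangle)\,v=0\) has its solution in the wrong half-space. One way is to evaluate the gradient on the hyperplane \(\langle\theta,v_0\rangle=0\) and show that it points strictly into the region \(\langle\theta,v_0\rangle<0\) there, so the minimizer cannot sit on or behind it. If \(\ell\) is nondecreasing and convex, minimizers may be non-unique or escape to infinity. Here I would use \(\inf\ell<\ell(0)\) to show that \(\theta=0\) is not optimal. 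I would then make the auxiliary data ``balanced'' in the \(u\)-direction, with some annotator types reversing \(a\) and \(b\), so that the loss is coercive and a finite minimizer exists. Finally I would argue that every minimizer lies strictly inside the bad half-space, by comparing \(\theta\) with its reflection across \(v_0^\perp\) and showing the reflection strictly lowers the loss.

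The main obstacle is this last certification. It has to hold uniformly over every admissible \(\ell\), not for a single loss such as the logistic one. Because of that, I would avoid computing \(\theta^\star\) and rely on the reflection/symmetry comparison, since it uses only monotonicity and convexity. If the reflection argument proves too delicate, my fallback is to let \(d\) and the counts depend on \(\ell\) (the statement allows this) and tune the ratio of majority to unanimous counts until the dominant-direction argument closes.
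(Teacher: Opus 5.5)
The survey gives no proof of this result beyond the ``weak direction'' heuristic, and your plan is a faithful elaboration of that heuristic. The gap is exactly the step you flag as the main obstacle: neither certificate you propose holds in your own configuration.

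\textbf{Both certificates fail.} Realize your recipe with the minority type ranking $b\succ y^+\succ y^-\succ a$, and move along $v_0^{\perp}$ far out on the side where $\langle\theta,u\rangle>0$.
\begin{itemize}
\item The majority comparisons are satisfied there with growing margin, so for logistic or hinge loss their $\ell'$ vanishes.
\item The minority comparisons are violated with $\ell'\to 1$. Their vectors add up to a positive multiple of $u$, since the $\phi(y^\pm)$ terms cancel between $b\succ y^\pm$ and $y^\pm\succ a$. This multiple of $u$ has positive inner product with $v_0$, precisely because you chose $\langle u,v_0\rangle>0$.
\item The unanimous term contributes $|\annotators|\,\ell'(0)\|v_0\|^2>0$.
\end{itemize}
So $\langle\nabla\mathcal{L},v_0\rangle>0$ there, the opposite of what your first certificate requires. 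For the same reason, reflecting a nearby point of the good half-space across $v_0^{\perp}$ worsens both the unanimous term and the violated ones, so it raises the loss. Both certificates therefore need an a priori localization of $\theta^\star$, and that is what is missing.

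The fallback does not help either. With complete preferences every pair has total count $|\annotators|$, and the unanimous pair already attains that maximum, so counts cannot be tuned to weaken it.

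\textbf{Small $\|v_0\|$ is not the mechanism.} Take $\phi(a)=Lu$, $\phi(b)=-Lu$, $\phi(y^-)=0$ and $\phi(y^+)=t(-\alpha u+\beta u^{\perp})$ with $\alpha,\beta,t>0$. Use the linearly realizable types $a\succ y^+\succ y^-\succ b$ and $b\succ y^+\succ y^-\succ a$, the first more numerous. This is exactly your recipe.
\begin{itemize}
\item Only $y^+$ has a $u^{\perp}$-component, so the $u^{\perp}$-coordinate of $\theta$ makes $r(y^+)$ a free score.
\item As a function of that score, the loss is convex with derivative $-|\annotators|\,\ell'(0)$ at $r(y^+)=r(y^-)=0$, because the terms against $a$ and $b$ cancel in pairs.
\item Hence, whenever $\ell'(0)>0$ (e.g.\ logistic), every minimizer has $r(y^+)>r(y^-)$, for every $t>0$.
\end{itemize}
What actually decides the sign is $\langle\theta_0,w\rangle$ on the minimizer set $M_0$ of the \emph{merged} instance $\phi(y^+)=\phi(y^-)=p$, in which the unanimous term is the constant $|\annotators|\,\ell(0)$. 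In this example $M_0$ is unbounded in the $u^{\perp}$ direction, so nothing is pinned.

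\textbf{The missing idea is a continuity argument.}
\begin{enumerate}
\item Make the merged loss coercive (both orientations of spanning pairs, with comparable counts), so that $M_0$ is compact.
\item Ensure $0\notin M_0$, using $\inf\ell<\ell(0)$ and a strict majority direction.
\item Choose $w$ with $\langle\theta,w\rangle<0$ on $M_0$, and set $\phi(y^+)=p+tw$.
\item Since $\mathcal{L}_t\to\mathcal{L}_0$ uniformly on compacts with uniform coercivity, minimizers of $\mathcal{L}_t$ accumulate only on $M_0$. So for small $t$ every minimizer satisfies $r(y^+)-r(y^-)=t\langle\theta_t,w\rangle<0$.
\end{enumerate}
This needs no $\ell$-specific computation. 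Affine $\ell$ meets the hypotheses but is degenerate and needs a convention anyway.

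In the example above, it suffices to move $p=p_e u^{\perp}$ off the $a$--$b$ axis. The merged loss is then even in $r(p)$, so for strictly convex $\ell$ with $\ell'(0)>0$ one gets $M_0=\{c^\star u\}$ with $c^\star>0$, and your $w=-\alpha u+\beta u^{\perp}$ works.

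\textbf{Realizability.} Suppose annotators' rankings must be linearly induced, as in the linear social choice model behind this result. (Without that requirement, an unrealizable one-dimensional instance makes the statement trivial.) Then you must also check $\langle\theta_i,w\rangle>0$ for each annotator's parameter $\theta_i$. That check, not the size of $\|v_0\|$, is what taking $w$ nearly orthogonal to $u$ is for.
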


At a high level, the loss must find a single set of parameters that applies to all pairwise comparisons at once. Each comparison places two demands on the fit: assigning the preferred response the higher score, and increasing confidence through a larger margin $r(y^+) - r(y^-)$. In the unrestricted setting, these demands can be satisfied independently, since the relevant reward differences can be adjusted without forcing a change in unrelated comparisons. Under a restricted hypothesis class, they interact because all margins are induced by the same trainable parameters.

The loss then chooses a global fit across all observed comparisons, allocating score separation across comparison directions. Some directions matter more for this fit than others, either because many comparisons point along them or because their feature differences have a larger effect on the loss. A unanimous comparison can still be associated with a weak direction in this global problem. The learned model can then fit the dominant comparison directions while assigning the wrong ordering to the unanimous pair. 

The same parametric restriction that lets the reward generalize beyond observed comparisons can also break the guarantees of the underlying aggregation rule. In this case, the rule and its learned approximation come apart even on a property as basic as unanimity.

\section{Welfare Loss and Sparse Elicitation}
\label{sec:welfare-loss}
One can view alignment from human feedback as a limited-information approximation procedure. For each prompt, the pipeline aims to induce the model behavior that is best for a population of annotators, while observing only a partial description of their preferences. The true objective, however, may depend on unobserved information, such as the preference intensities represented by cardinal values. Here, we examine the loss incurred due to this lack of information, starting with whether there is, in fact, a loss.  

\runinhead{Identifiability.} Fix a prompt \(x\) and a candidate set \(\Y_x\subseteq\Y\). Suppose an annotator \(i\sim \annotators\) has a reward function \(r_i(\cdot)\) over these responses. A standard goal is to maximize \emph{utilitarian welfare}, defined as the expected reward across annotators. Formally, given a reward profile $\mathbf{r} = (r_i)_{i \in \annotators}$, the welfare of a response $y$ is
  \[W_x^{\mathbf{r}}(y)\coloneqq \mathbb{E}_{i\sim\annotators}\left[r_i(y)\right].\]
For a policy \(\pi(\cdot\mid x)\in\Delta(\Y_x)\), we can similarly define its expected welfare as
\[
W_x^{\mathbf r}(\pi)
\coloneqq
\mathbb{E}_{y\sim\pi(\cdot\mid x)}
\bigl[W_x^{\mathbf r}(y)\bigr].
\]
Access to these cardinal utilities would make welfare maximization immediate: we could compute the welfare of each response and train the policy accordingly. In practice, we observe only comparisons, so the question is whether they identify enough cardinal information to choose a high-welfare response.

Assuming annotators respond according to a link function $F$, we learn only the aggregate win probabilities:
  \[P_x(y \succ y')=\mathbb{E}_{i\sim\annotators}\left[F\left(r_i(y)-r_i(y')\right)\right].\]
This formulation captures BT responses, where choices are made with probabilities proportional to $\exp(r_i(y))$, as well as deterministic responses, where annotators simply select the higher-reward option. Unfortunately, unless $F$ is linear, it is impossible to reliably identify welfare-maximizing candidates from these probabilities alone, even for a single prompt $x$ with just two responses, $y$ and $y'$. Because a non-linear link function $F$ distorts the magnitude of reward differences, one can construct two distinct reward profiles, $\mathbf{r}$ and $\mathbf{r}'$, that yield identical expected win probabilities but differ in aggregate welfare. Specifically, there exist $\mathbf{r}$ and $\mathbf{r}'$ such that:
  \[\mathbb{E}_{i\sim\annotators}\left[F\left(r_i(y)-r_i(y')\right)\right]=\mathbb{E}_{i\sim\annotators}\left[F\left(r'_i(y)-r'_i(y')\right)\right]\]
  yet
  \[\mathbb{E}_{i\sim\annotators}\left[r_i(y)-r_i(y')\right]\neq\mathbb{E}_{i\sim\annotators}\left[r'_i(y)-r'_i(y')\right].\]
  Consequently, identical pairwise observation data can arise from populations with fundamentally different preferences in terms of welfare.

\runinhead{Distortion.} While \emph{exact} identification of welfare-maximizing candidates is impossible, a learned policy may still achieve approximately high welfare using only the observed comparisons. This guarantee is formalized in social choice through the concept of \emph{distortion}~\citep{procaccia_distortion_2006,boutilier_optimal_2012}. Recent work applies this lens to quantify how much welfare can be lost when a policy is learned from observed comparisons~\citep{golz_distortion_2025,oko_distortion_2026,ge_optimized_2026}.

Assume reward functions are nonnegative and normalized with $0 \le r_i(y) \le 1$ for each annotator $i$ and response $y$. Let $\mathcal{R}_x(P_x)$ denote the set of normalized reward profiles consistent with the pairwise preference object $P_x$ at prompt $x$. For a policy $\pi(\cdot \mid x) \in \Delta(\Y_x)$, its utilitarian distortion~\citep{procaccia_distortion_2006,anshelevich_filos-ratsikas_shah_voudouris_2021} at $x$ is:
\[
\mathrm{Dist}_x(\pi;P_x)
\coloneqq
\sup_{\mathbf r \in \mathcal R_x(P_x)}
\frac{
\max_{\pi' \in \Delta(\Y_x)} W_x^{\mathbf r}(\pi')
}{
W_x^{\mathbf r}(\pi)
}.
\]
Conditional on $P_x$, this quantity measures the worst-case welfare loss that $\pi$ can incur over normalized reward profiles that could have induced $P_x$,\footnote{When $W_x^{\mathbf r}(\pi)=0$, the corresponding ratio is interpreted as $\infty$.} where the welfare loss is defined by the ratio above. Since compatibility is defined by the elicited feedback, while the chosen policy is determined by the aggregation rule and policy class, the distortion reflects all three components of the decision-making process together.

\begin{theorem}[\cite{golz_distortion_2025}] \label{thm:kl-free-bt-distortion}
Fix a prompt \(x\) and a finite response set \(\Y_x\) with
\(|\Y_x|\ge 3\). Suppose annotator comparisons are well-specified by a BT model with inverse-temperature parameter \(\eta\), so that for each annotator \(i\in\annotators\),
\[
\Pr[y\succ_i y'\mid x]
=
\sigma\left(\eta\cdot (r_i(y)-r_i(y'))\right).
\]
In other words, the link function is $F(t) \coloneqq \sigma\left(\eta \cdot t \right)$.
For each pairwise preference object \(P_x\), let \(\pi_0(P_x)\) be
the policy that BT reward learning returns once the KL penalty is removed; by
\cref{thm:bt_borda}, it places all mass on the Borda winner under
\(P_x\). Then
\[
(1-o(1))\,\eta
\;\le\;
\sup_{P_x}
\mathrm{Dist}_x\left(\pi_0(P_x);P_x\right)
\;\le\;
O(\eta^{2}),
\]
where \(P_x\) ranges over pairwise preference objects generated by a
normalized reward profile \(\mathbf r\) and a comparison-pair sampling
distribution. The upper bound holds for every such \(P_x\); the lower
bound is the rate as \(\eta\to\infty\).
\end{theorem}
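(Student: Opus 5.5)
The plan is to prove the two bounds separately. Both reduce to statements about the Borda winner under $P_x$. Throughout, write $\mu$ for the comparison-pair sampling distribution, so that $\B(y)=\E_{y'\sim\mu}[P_x(y\succ y')]$ with $P_x(y\succ y')=\E_i[\sigma(\eta(r_i(y)-r_i(y')))]$. By \cref{thm:bt_borda}, $\pi_0$ plays the Borda winner $\hat y$.

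\emph{Upper bound.} The central step is a two-sided comparison between the sigmoid and a linear function on $[-1,1]$, since all reward differences lie there. For $|t|\le 1$ we have
\[
\sigma(\eta t)-\tfrac12 \;\le\; \tfrac{\eta}{4}\,t
\quad\text{for } t\ge 0,
\]
and, by concavity of $\sigma$ on $[0,\eta]$,
\[
\sigma(\eta t)-\tfrac12 \;\ge\; c\,t
\quad\text{for } t\ge 0,\qquad c=\sigma(\eta)-\tfrac12=\Theta(1).
\]
Antisymmetric versions hold for $t<0$.

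Let $y^\star$ be the welfare maximizer. Since $\hat y$ is the Borda winner, $\B(\hat y)\ge \B(y^\star)$, so
\[
\E_{y',i}\bigl[\sigma(\eta(r_i(\hat y)-r_i(y')))-\sigma(\eta(r_i(y^\star)-r_i(y')))\bigr]\ge 0 .
\]
The idea is to convert this inequality into a bound on $W(y^\star)$ in terms of welfare quantities involving $\hat y$. The sigmoid is $\eta/4$-Lipschitz, and on bounded arguments its increments are at least $\Omega(e^{-\eta})$ times the gap, which is too weak. I will therefore instead use the mean-value bound on the slope ratio, restricted to the regime where the relevant arguments lie in $[-1/\eta,1/\eta]$, and handle the remaining region by cruder accounting. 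The aim is to show
\[
W(y^\star)\le O(\eta^2)\,\bigl(W(\hat y)+\E_{y'}[W(y')]\bigr).
\]
I then need to absorb the average term $\E_{y'}[W(y')]$. One factor of $\eta$ should come from the slope ratio and the other from this absorption. Absorbing it requires bounding $\E_{y'}W(y')$ by $O(\eta)\,W(\hat y)$, again using that $\hat y$ beats the average in Borda score, together with $\B(\hat y)\ge \tfrac12$-type centering. I expect this to be the main obstacle: the worst-case mix of large and small reward gaps across annotators must be controlled carefully, which probably requires splitting annotators by whether $|r_i(\hat y)-r_i(y')|\lessgtr 1/\eta$.

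\emph{Lower bound.} I will use an explicit construction with three responses $a,b,c$, where $\mu$ samples the pairs. The design has a small fraction $p$ of annotators with $r(a)=1$ and all other rewards $0$. The remaining annotators weakly prefer $b$ by a tiny margin $\epsilon$ over $a$ and $c$, with absolute rewards near $0$. Choose $p\approx 1/\eta$ so that the majority's $\Theta(\eta\epsilon)$ Borda advantage for $b$ per annotator, over a $1-p$ mass, slightly exceeds the minority's bounded contribution $p(\sigma(\eta)-\tfrac12)\approx p/2$ to $a$'s Borda score. Concretely, take $\epsilon$ a constant fraction so that $\eta\epsilon\approx$ const. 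Then $b$ wins Borda with welfare about $\epsilon(1-p)$, while $W(a)\approx p$. The ratio $p/\epsilon$ must reach $\eta$, so the parameters need tuning: the majority's Borda gain saturates at $\sim\sigma(\eta\epsilon)-\tfrac12$, which for small $\eta\epsilon$ is $\approx \eta\epsilon/4$. Setting $(1-p)\eta\epsilon/4\gtrsim p/2$ gives $p/\epsilon\approx \eta/2$. To sharpen the constant toward $(1-o(1))\eta$, I will use the third response $c$ and an asymmetric $\mu$, so that the minority's pull on $a$ is diluted; the $o(1)$ then comes from the Taylor error of $\sigma$ as $\eta\epsilon\to0$. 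Finally, I will check that $W(\hat y)>0$, so the distortion is finite and the ratio is $(1-o(1))\eta$.
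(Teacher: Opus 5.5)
The survey only cites this result and gives no proof, so I am assessing your argument on its own. The upper-bound sketch is unfinished but fixable with the tools you already wrote down. The lower bound has a genuine gap.

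\textbf{Lower bound.} Your construction family cannot get past $\eta/2$, for any choice of $\mu$. Take the minority of mass $p$ with $r(a)=1$ and $r(b)=r(c)=0$. Its contribution to $P_x(b\succ z)-P_x(a\succ z)$ is $-p\bigl(\sigma(\eta(1-t))-\sigma(-\eta t)\bigr)$, where $t\in\{0,1\}$ is the minority's reward for $z$. This equals $-p(\sigma(\eta)-\tfrac12)$ for \emph{every} opponent $z$. The majority's contribution is at most $(1-p)\eta\varepsilon/4$ for every $z$. Hence $\B(b)-\B(a)\le(1-p)\eta\varepsilon/4-p(\sigma(\eta)-\tfrac12)$ for all $\mu$; if the majority's rewards on $a$ and $c$ are exactly $0$, the gap does not depend on $\mu$ at all. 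So $b$ winning forces $W(a)/W(b)\le p/((1-p)\varepsilon)\le(1+o(1))\eta/2$. Reweighting opponents cannot dilute a pull that is identical against every opponent.

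The missing idea is that part of $a$'s welfare must come from annotators who rank the heavily sampled opponent \emph{above} $a$. Their $a$-versus-$b$ comparisons against that opponent are then saturated, so they add welfare to $a$ without adding Borda score. For example:
\begin{itemize}
\item $G_1$ (mass $p$): $r(a)=1-\eta^{-1/2}$, $r(c)=1$, $r(b)=0$;
\item $G_2$ (mass $p$): $r(a)=1$, $r(b)=r(c)=0$;
\item $G_3$ (mass $1-2p$): $r(b)=\varepsilon$, $r(a)=r(c)=0$;
\item $\mu$ puts all but a vanishing fraction of its mass on $c$.
\end{itemize}
Then $P_x(a\succ c)-\tfrac12=p\bigl(\sigma(\eta)-\sigma(\sqrt{\eta})\bigr)\approx0$, while $P_x(b\succ c)-\tfrac12=(1-2p)(\sigma(\eta\varepsilon)-\tfrac12)-p(\sigma(\eta)-\tfrac12)$. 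So $b$ is the Borda winner once $(1-2p)\eta\varepsilon/4$ slightly exceeds $p/2$. This gives $W(b)\approx2p/\eta$ against $W(a)\approx2p$ (with $W(c)=p$), a ratio of $(1-o(1))\eta$ as $\eta\to\infty$ and $p\to0$.

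\textbf{Upper bound.} Your two secant bounds suffice, but apply them to each Borda score relative to $\tfrac12$, not to the termwise difference $\B(\hat y)-\B(y^\star)$. The termwise difference is what produces the $e^{-\eta}$ slopes, and no splitting of annotators at scale $1/\eta$ is needed. Set $h(t)=\sigma(\eta t)-\tfrac12$, $c=\sigma(\eta)-\tfrac12$, $t^{\pm}=\max(\pm t,0)$ and $\bar W=\E_{y'\sim\mu}[W(y')]$. Then $ct^+-\tfrac{\eta}{4}t^-\le h(t)\le\tfrac{\eta}{4}t^+-ct^-$ on $[-1,1]$. Nonnegativity of rewards gives $\E[(r_i(\hat y)-r_i(y'))^+]\le W(\hat y)$ and $\E[(r_i(y')-r_i(y^\star))^+]\le\bar W$. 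Therefore
\[
cW(y^\star)-\tfrac{\eta}{4}\bar W\le\B(y^\star)-\tfrac12\le\B(\hat y)-\tfrac12\le\tfrac{\eta}{4}W(\hat y),
\qquad
0\le\B(\hat y)-\tfrac12\le\tfrac{\eta}{4}W(\hat y)-c\bigl(\bar W-W(\hat y)\bigr).
\]
The second chain is your centering step: the Borda maximum is at least its $\mu$-average, which is $\tfrac12$. Combining the two chains gives $W(y^\star)\le\frac{\eta}{4c}\bigl(2+\frac{\eta}{4c}\bigr)W(\hat y)=O(\eta^2)\,W(\hat y)$.

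Your displayed intermediate goal $W(y^\star)\le O(\eta^2)\bigl(W(\hat y)+\E_{y'}[W(y')]\bigr)$ is too weak. After the $O(\eta)$ absorption it would only yield $O(\eta^3)$. The first step must lose a single factor of $\eta$, as it does above.
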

The lower bound should be read against the minimax lower bound for the same
information model. \citet{golz_distortion_2025}
show that, when each annotator contributes a single comparison, every rule
mapping the comparisons to a policy incurs distortion at least
\(\bigl(\tfrac12+o(1)\bigr)\eta\) on some reward profile. Strikingly, this lower bound is achieved by a framework known as Nash learning from human feedback, which we define and analyze in \cref{sec:nlhf} (see \cref{thm:nlhf-distortion-optimality}).

\runinhead{Richer Elicitation.}
\cref{thm:kl-free-bt-distortion} demonstrates that approximating welfare is possible, though imperfect, at least under BT responses. However, utilitarian welfare is not the only valid objective in social choice. Consider two hypothetical responses $y$ and $y'$: $y$ provides a utility of $2$ to all annotators, while $y'$ provides a utility of $3$ to half the annotators and $1$ to the other half. Utilitarian welfare treats these outcomes as equivalent. By contrast, more egalitarian choices, such as \emph{Nash welfare}\,---\,the product of utilities\,---\,would strictly prefer $y$, favoring broad approval over polarizing outcomes that benefit one group at the expense of another. How, then, can we optimize for these alternative choices?

\citet{chidambaram_direct_2026} and \citet{ge_linear_2026} consider linear social choice models (similar to the one described in \Cref{sec:pareto-optimality}) where responses follow BT and deterministic link functions, respectively. Both show that even in these restricted settings, a single pairwise comparison per annotator cannot reveal which of the two responses a more egalitarian objective should prefer. Without preference intensities, a single comparison cannot separate a universally indifferent population from one that is evenly split between strong opposing preferences; in both, each candidate is preferred equally often across the population. Thus, there is no hope of optimizing more egalitarian welfare functions that would strictly prefer broad-appeal candidates.  On the other hand, under mild structural conditions, eliciting two comparisons per annotator yields enough information to completely identify the voter type distribution. This identification enables direct optimization for \emph{any} desired social welfare function.

Similarly, \citet{cherapanamjeri_learning_2025} demonstrate in a latent-utility model that moving from pairwise choices to best-of-three queries provides essentially complete identifiability under appropriate structural conditions.

Richer information can also come from passively recorded signals. For example, the time taken to provide a comparison label can carry information about preference intensities that binary labels discard~\citep{echenique_general_2025,echenique_response_2026}.

While these models rely on different structural assumptions, each points to the same fact that slightly richer elicitation can, in principle, make far more welfare objectives optimizable than pairwise comparison allows.

\section{Direct Alignment From Pairwise Preferences}\label{sec:nlhf}
Earlier sections studied alignment rules that infer a scalar reward from pairwise preference data and then optimize a policy against that reward. This scalar reward projects pairwise comparisons among responses onto one ordered axis, which forces the aggregate relation into a transitive ranking. This section asks what is lost when the aggregate relation is instead intransitive, and how alignment objectives can keep the pairwise structure directly.

Such intransitivity can arise even when every annotator is internally consistent. Averaging their preferences may produce a \emph{Condorcet cycle}, where $y_a$ is majority-preferred to $y_b$, $y_b$ to $y_c$, and yet $y_c$ to $y_a$~\citep{brandt2016handbook}. In such cases there is no \emph{Condorcet winner}, meaning no response defeats every other response by majority comparison. Any deterministic target must therefore break the cycle somewhere, motivating objectives that reason over the pairwise relation itself.

\runinhead{Maximal Lotteries.}
One way to avoid arbitrarily breaking cycles is to select a distribution over responses rather than a single response. This shifts the question from which response should win to which randomized target should be chosen. \textit{Maximal lotteries} answer this by comparing each candidate distribution against alternatives using the population majority margin: draw one response from each distribution and measure which the population prefers on average. This solution concept was introduced by \citet{kreweras1965aggregation}, developed systematically by \citet{fishburn_probabilistic_1984},\footnote{The same solution has been rediscovered independently under other names, including the ``game theory method'' in voting~\citep{rivest2010optimal} and the von Neumann winner in contextual dueling bandits~\citep{dudik_contextual_2015}.} and later applied in many settings, including AI evaluation~\citep{lanctot_evaluating_2025,khalaf_robust_2026}.

Fix a prompt \(x\) and a finite candidate set \(\Y_x\subseteq\Y\). Let
\[
  M_x(y,y')
  \coloneqq
  P_x(y \succ y') - P_x(y' \succ y)
\]
denote the \emph{majority margin}\,---\,the net pairwise preference for \(y\) over \(y'\) in the population.
A maximal lottery is a distribution \(p^\star \in \Delta(\Y_x)\) whose expected margin against every response distribution $q \in \Delta(\Y_x)$ is nonnegative:
\[
\mathbb{E}_{y\sim p^\star,y'\sim q}[M_x(y,y')]\geq 0.
\]
Since \(M_x\) is skew-symmetric, the maximal lotteries are exactly the maximin strategies of the zero-sum game with payoff matrix \(M_x\):\footnote{Existence follows from von Neumann's minimax theorem~\citep{neumann_zur_1928}. For a generic majority-margin matrix, the maximal lottery is unique, although degenerate ties can yield multiple maximal lotteries.}
\[
  p^\star
  \in
  \arg\max_{p \in \Delta(\Y_x)}
  \min_{q \in \Delta(\Y_x)}
  p^\top M_x q
  =
  \arg\max_{p \in \Delta(\Y_x)}
  \min_{q \in \Delta(\Y_x)}
  \mathbb{E}_{y \sim p,\; y' \sim q}
  \left[M_x(y,y')\right].
\]
When the majority relation is cyclic, the maximal lottery spreads mass across several responses that compete in the cycle, so the target itself records the unresolved disagreement. When a strict Condorcet winner exists, no mixing is needed and the unique maximal lottery places all mass on that response. Maximal lotteries also satisfy other desirable axiomatic properties~\citep{brandl_consistent_2016-1}. In particular, they are invariant to \emph{exact} clones (duplicate responses). However, the introduction of \emph{approximate} clones\,---\,as defined in \cref{sec:clone-robustness}\,---\,can affect the lottery, even if the approximation is arbitrarily precise.

\runinhead{Nash Learning from Human Feedback.}
\emph{Nash Learning from Human Feedback (NLHF)}~\citep{munos_nash_2024,maura-rivero_jackpot_2025} applies the same maximin idea directly at the level of policies. Instead of compressing pairwise comparisons into a scalar reward, NLHF uses the pairwise preference model as the payoff of a two-player game between policies and trains toward the equilibrium of that game.

The population preference $P_x$ induces a preference between
two policies by comparing the responses they generate,
\[
  P_x(\pi \succ \pi') \coloneqq \mathbb{E}_{y \sim \pi(\cdot \mid x),\, y' \sim \pi'(\cdot \mid x)}
  \!\left[P_x(y \succ y')\right].
\]
Thus, $P_x(\pi \succ \pi')$ is the probability that $\pi$ produces a response
preferred to one produced by $\pi'$, at prompt $x$. We write
\[
P(\pi \succ \pi')\coloneqq \mathbb{E}_{x \sim D}\!\left[P_x(\pi \succ \pi')\right]
\]
for the corresponding comparison after averaging over prompts drawn from \(D\). This induces a preference relation over policies
\[
\pi \succ \pi' \quad \Longleftrightarrow \quad P(\pi \succ \pi') > \frac{1}{2}.
\]
NLHF chooses a policy with the best worst-case comparison probability against alternative policies:
\begin{align*}
\pi^\star
\in
\arg\max_{\pi}
\min_{\pi'}
P(\pi \succ \pi').
\end{align*}
For a fixed prompt with finite candidate support and no regularization, $\pi^\star$ coincides with the maximal lottery of that prompt's margin game. In this setting, the maximal lottery can be computed by the linear program for the row player's maximin strategy. In the open-ended parametric setting, NLHF instead approaches the corresponding policy-level equilibrium through self-play, optimizing a policy against opponents generated from its own iterates. This equilibrium-seeking template underlies several recent preference-optimization methods~\citep{swamy_minimaximalist_2024,wu_self-play_2024,calandriello_human_2024,rosset_direct_2024,tiapkin_proximal_2025,heymann_adaptive_2025}.

\runinhead{Welfare Interpretation.}
The welfare-loss perspective of \cref{sec:welfare-loss} gives a precise sense in which maximal lotteries are optimal. The following result bounds their worst-case utilitarian distortion under anonymous pairwise feedback.
\begin{theorem}[\cite{golz_distortion_2025}] \label{thm:nlhf-distortion-optimality}
Suppose annotator comparisons are generated from normalized rewards by the BT model in \cref{thm:kl-free-bt-distortion}, with inverse-temperature \(\eta\). Let
\(\pi_x^{\mathrm{NLHF}}(P_x)\) be any maximal lottery of the margin game
\(M_x\). Then
\[
\sup_{P_x}
\mathrm{Dist}_x\!\left(\pi_x^{\mathrm{NLHF}}(P_x);P_x\right)
\le
\left(\frac12+o(1)\right)\eta ,
\]
where \(P_x\) ranges over pairwise preference objects generated by normalized
reward profiles and comparison-pair sampling distributions, and the \(o(1)\)
term is as \(\eta\to\infty\).
\end{theorem}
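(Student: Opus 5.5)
The plan is to compare the maximal lottery $p^\star=\pi_x^{\mathrm{NLHF}}(P_x)$ against an arbitrary deterministic response. Distortion is a ratio of linear functions of $\pi'$, so it suffices to fix a normalized profile $\mathbf r$ consistent with $P_x$ and show $W_x^{\mathbf r}(y^\dagger)\le(\tfrac12+o(1))\eta\,W_x^{\mathbf r}(p^\star)$ for the welfare-optimal $y^\dagger\in\Y_x$. The only information used is the maximal-lottery guarantee with $q=\delta_{y^\dagger}$: $\E_{y\sim p^\star}[M_x(y,y^\dagger)]\ge 0$. Equivalently, $\E_{y\sim p^\star}[P_x(y\succ y^\dagger)]\ge\tfrac12$, since $P_x(y\succ y')+P_x(y'\succ y)=1$ under BT. Substituting the BT form gives
\[
\E_{y\sim p^\star}\,\E_{i}\bigl[\sigma\bigl(\eta(r_i(y)-r_i(y^\dagger))\bigr)\bigr]\ \ge\ \tfrac12 .
\]

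Next we convert this averaged sigmoid inequality into a welfare inequality using a pointwise bound. Write $a=r_i(y)$ and $b=r_i(y^\dagger)$, both in $[0,1]$. I want a function $g_\eta$ with
\[
\sigma(\eta(a-b))-\tfrac12\le c_\eta\,a-\kappa_\eta\,b\quad\text{for all }a,b\in[0,1],
\]
with $c_\eta/\kappa_\eta=(\tfrac12+o(1))\eta$. Averaging over $i$ and $y\sim p^\star$ and using the displayed inequality then yields $0\le c_\eta W(p^\star)-\kappa_\eta W(y^\dagger)$, which is the claim. Concretely, we should find the best linear upper bound of $h(a,b)=\sigma(\eta(a-b))-\tfrac12$ over the unit square. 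At $b=0$ this is $\sigma(\eta a)-\tfrac12\le \eta a/4$ by concavity of $\sigma$ on $[0,\infty)$, suggesting $c_\eta\approx\eta/4$. At $a=0$ we have $\sigma(-\eta b)-\tfrac12$, which tends to $-\tfrac12$ once $\eta b\gg1$, but it is only about $-\eta b/4$ near $b=0$. That is the tension.

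The key step is choosing $(c_\eta,\kappa_\eta)$ so that the linear majorant holds on the whole square. A linear bound through the origin with slope $\kappa$ in $b$ must satisfy $\kappa b\le \tfrac12-\sigma(\eta(a-b))+c a$ everywhere. At $a=0,b=1$ this forces $\kappa\le\tfrac12-\sigma(-\eta)\approx\tfrac12$. Near small $b$ it forces $\kappa\le\eta/4$, which is slack. So one expects $\kappa_\eta\to\tfrac12$ and $c_\eta\approx \eta/4$, giving ratio $\approx\eta/2$.

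The obstacle is verifying the majorant on the interior $a,b\in(0,1)$ with only $(1+o(1))$ slack. There $\sigma(\eta(a-b))$ is neither convex nor concave in the linear combination. I would handle this by a case split on the sign of $a-b$. When $a\ge b$, I would use $\sigma(t)-\tfrac12\le t/4$ together with $\kappa\le \eta/4$. When $a<b$, I would use that $\tfrac12-\sigma(-s)$ is concave increasing in $s=\eta(b-a)$. It should then dominate its chord to $s=\eta$, which takes value $\tfrac12-\sigma(-\eta)$. Tuning $\kappa_\eta=\tfrac12-\sigma(-\eta)-\epsilon_\eta$ and $c_\eta=\eta/4+\ldots$ with vanishing corrections should give the bound.

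Finally, the edge cases. If $W(p^\star)=0$, the inequality forces $W(y^\dagger)=0$, so the ratio is never infinite. The bound is also uniform over the sampling distribution, since only the $q=\delta_{y^\dagger}$ maximin property was used.
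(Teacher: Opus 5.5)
Your proposal is correct and is a complete proof. The survey states this theorem without proof and defers to G\"olz et al., so there is no in-paper argument to match. Your two-step route is the natural one: use the maximin property against the welfare-optimal response $y^\dagger$, then apply a pointwise linear majorant of the BT link.

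The vanishing corrections you anticipate are unnecessary. The choice $c_\eta=\eta/4$, $\kappa_\eta=\sigma(\eta)-\tfrac12=\tfrac12\tanh(\eta/2)$ satisfies $\sigma(\eta(a-b))-\tfrac12\le c_\eta a-\kappa_\eta b$ on all of $[0,1]^2$.
\begin{itemize}
\item For $a\ge b$ this follows from $\sigma(t)-\tfrac12\le t/4$ and $\kappa_\eta\le\eta/4$.
\item For $a<b$ your chord bound gives $\sigma(\eta(a-b))-\tfrac12\le\kappa_\eta a-\kappa_\eta b$. You then need $\kappa_\eta\le c_\eta$ (again from $\sigma(\eta)-\tfrac12\le\eta/4$) and $a\ge 0$ to absorb the $\kappa_\eta a$ term. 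You left this step implicit.
\end{itemize}
The result is the explicit, non-asymptotic bound $\frac{\eta}{2}\coth(\eta/2)=\frac{\eta}{2}+O(\eta e^{-\eta})$, valid for every $\eta>0$. The two endpoint constraints you identified, at $(a,b)$ near $(0,0)$ and at $(0,1)$, show this is the best any majorant of the form $c a-\kappa b$ can give.

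Your argument also buys the KL-regularized extension the survey mentions right after the theorem. At the symmetric equilibrium $\pi^\star$ of the game $P_\tau$, we have $P_\tau(\pi^\star\succ\pi')\ge\tfrac12$ for all $\pi'$. Hence $P(\pi^\star\succ\pi')\ge\tfrac12$ whenever $\mathrm{KL}_{D}(\pi'\,\|\,\pibase)\le\mathrm{KL}_{D}(\pi^\star\,\|\,\pibase)$, and the same pointwise majorant applies after averaging over prompts.

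Your edge-case remark holds under the standard convention $0/0=1$. The survey's footnote, read literally, would give every rule infinite distortion on the all-zero profile. That is a defect of the footnote, not of your argument.
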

Together with the minimax lower bound of \cref{sec:welfare-loss}, \cref{thm:nlhf-distortion-optimality} shows that the maximal lottery attains the smallest possible worst-case distortion, up to lower-order terms, among rules observing anonymous BT pairwise feedback.

\runinhead{Regularization.}
As with RLHF, deployed NLHF regularizes the learned policy toward a reference
policy \(\pibase\) with a KL penalty. The regularized preference between
policies is
\[
  P_\tau(\pi \succ \pi')
  \;\coloneqq\;
  P(\pi \succ \pi')
  - \tau \, \mathrm{KL}_{D}(\pi \,\|\, \pibase)
  + \tau \, \mathrm{KL}_{D}(\pi' \,\|\, \pibase),
\]
where the KL terms penalize each policy for diverging from \(\pibase\),
symmetrically across the two players.\footnote{Viewing each player's payoff
separately, a policy receives its preference payoff minus a KL penalty for
its own divergence from \(\pibase\): \(R(\pi;\pi') = P(\pi \succ \pi') -
\tau\,\mathrm{KL}_{D}(\pi \,\|\,\pibase)\), with the symmetric expression
for \(\pi'\).} The training target remains the Nash equilibrium of this
modified game, which exists and is unique~\citep{munos_nash_2024}. 

\cref{thm:nlhf-distortion-optimality}'s optimality extends to this KL-regularized objective. Since the penalty keeps the learned policy near $\pibase$, the relevant comparison is with policies that are allowed to move the same distance from the reference. That is, the regularized equilibrium is compared with the highest welfare policy whose KL divergence from $\pibase$ is no larger than its own. Against this benchmark, NLHF retains the same optimal distortion guarantee, for any reference policy and any regularization strength~\citep{golz_distortion_2025}.
\section{Discussion}\label{sec:discussion}
Taken together, the sections above give a social-choice account of alignment from human feedback as a sequence of design choices. Human judgments must be elicited, aggregated into a model, generalized beyond the observed data, and translated into policy behavior. 
We conclude by discussing several related research threads that extend this pipeline view beyond the formal results surveyed earlier.

\runinhead{Preference Collapse.}
Fine-tuning a base policy toward a learned reward signal can narrow the policy's behavior even when the reward itself supports a broader range of responses. With stronger optimization, probability mass can concentrate around a small set of reward-favored outputs, producing \emph{preference collapse}~\citep{xiao_algorithmic_2025}, where majority views are further amplified~\citep{shypula2025evaluating,kirk_understanding_2023} and response diversity is reduced~\citep{lakeDistributionalOvertonPluralism2024,khalifa_distributional_2021,cui_entropy_2025}. This is one way in which the downstream policy can sharpen the aggregation choices made by the learned reward: biases in the preference data, once encoded in the reward signal, may be amplified by subsequent optimization~\citep{shapira_how_2026,slocumdiverse}. As a result, the learned policy may fail to represent the full range of normative considerations expressed across populations, domains, and interaction contexts~\citep{santurkar_whose_2023,kirkPersonalisationBoundsRisk2023}.

\runinhead{Multiple Policies.}
A growing number of proposals advocate for \emph{pluralistic alignment}: modeling multiple perspectives in parallel so as to better capture the breadth of human judgments~\citep{sorensen_position_2024, feng2024modular, chakraborty_maxmin-rlhf_2024-1,kim_beyond_2025}. Work in this direction differs in where it relaxes the standard single-reward pipeline. Some approaches retain a single learned reward while limiting how aggressively the policy optimizes against it, reducing the policy-level amplification of the aggregate objective~\citep{slocumdiverse,gx-chen_kl-regularized_2025-2}. Others enrich the reward model so that it can encode preference heterogeneity more directly~\citep{hong2024adaptive,wang2023aligning,xiong_projection_2025}.

A more direct approach represents divergent preferences by training multiple reward models, each matched to a distinct segment of the population~\citep{chakraborty_maxmin-rlhf_2024-1,park_rlhf_2024,chenPALPluralisticAlignment2024,sorensen2025value}. The typical approach clusters annotators, by demographics or inferred preference patterns, and fits a group-specific reward to each cluster. \citet{halpern_pairwise_2025} avoid this intermediate clustering step by fitting a compact reward ensemble whose aggregate pairwise choices match the observed preference $P_x$, so each component induces a coherent policy while the mixture preserves disagreement.

Once such an ensemble is learned, its components can be deployed in several ways to serve pluralistic goals~\citep{sorensen_position_2024}. For example, the system can present multiple outputs as an Overton-style slate~\citep{poole-dayan_benchmarking_2025}, distill them into a consensus statement~\citep{bakkerFinetuningLanguageModels2022,fish_generative_2024,boehmer_generative_2025}, select the policy best matched to a user's stated preference, or sample from the policy mixture. Over repeated use, mixture sampling preserves population-level diversity and counters the tendency of aligned models to recycle a small set of high-reward responses~\citep{kirk_understanding_2023,khalifa_distributional_2021,perez2022red}, which can be particularly valuable in creative workflows.

\runinhead{Personalization.}
At a finer level of granularity, the alignment objective can be personalized to individual annotators, treating each user as carrying a unique reward function~\citep{poddarPersonalizingReinforcementLearning2024}. Since fitting a separate model per user is infeasible under sparse per-user data, these methods place each user's reward in a shared low-dimensional space, recovering a per-user latent variable~\citep{kim_swap-guided_2025} or a weighting over common reward components~\citep{barreto_capturing_2026,bose_lore_2025} from a handful of preferences. Lightweight user embeddings or low-rank adapters then specialize a shared model to the individual. Even so, personalizing to each user risks narrowing their information diet and reinforcing prior views, prompting calls to bound how far personalization should extend~\citep{kirkPersonalisationBoundsRisk2023,kirk_prism-x_2026}.

\runinhead{AI for Social Choice and Democracy.}
The work surveyed in this paper is primarily \emph{social choice for AI}: it uses social choice theory to diagnose how alignment pipelines aggregate heterogeneous feedback, and to design learned rewards or policies that handle disagreement more explicitly. A complementary agenda runs in the reverse direction, asking how AI systems can support collective decision-making, deliberation, and democratic representation. One direction uses AI to help groups deliberate and converge on shared positions, through \textit{generative social choice}~\citep{fish_generative_2024, boehmer_generative_2025}, deliberation mediators such as the \emph{Habermas machine}~\citep{tessler_ai_2024}, formal accounts of common ground~\citep{bakkerFinetuningLanguageModels2022,chooi_finding_2026}, and platforms for large-scale opinion aggregation~\citep{ovadya_generative_2023}. More broadly, progress in social choice for AI and AI for social choice may prove mutually reinforcing, as advances in one direction generate both the concepts and the tools needed to advance the other.


\bibliographystyle{unsrtnat}
\bibliography{abb,social_choice_rlhf}

\begin{received}
\end{received}

\end{document}